\newcommand{\N}{\mathcal{N}}
\newtheorem{theorem}{Theorem}
\newtheorem{definition}{Definition}
  \providecommand\BibTeX{{%
    \normalfont B\kern-0.5em{\scshape i\kern-0.25em b}\kern-0.8em\TeX}}}
\begin{document}

\title{Stochastic Variational Inference with\\ Tuneable Stochastic Annealing}

\author{
\IEEEauthorblockN{John Paisley}
\IEEEauthorblockA{\textit{Columbia University}}
\and
\IEEEauthorblockN{Ghazal Fazelnia}
\IEEEauthorblockA{\textit{Spotify Research}}
\and
\IEEEauthorblockN{Brian Barr}
\IEEEauthorblockA{\textit{Capital One Labs}}
}

\maketitle

\begin{abstract}
We exploit the observation that stochastic variational inference (SVI) is a form of annealing and present a modified SVI approach --- applicable to both large and small datasets --- that allows the amount of annealing done by SVI to be tuned. We are motivated by the fact that, in SVI, the larger the batch size the more approximately Gaussian is the noise of the gradient, but the smaller its variance, which reduces the amount of annealing done to escape bad local optimal solutions. We propose a simple method for achieving both goals of having larger variance noise to escape bad local optimal solutions and more data information to obtain more accurate gradient directions. The idea is to set an actual batch size, which may be the size of the data set, and an \textit{effective} batch size that matches the increased variance of a smaller batch size. The result is an approximation to the maximum entropy stochastic gradient at a desired variance level. We theoretically motivate our ``SVI+'' approach for conjugate exponential family model framework and illustrate its empirical performance for learning the probabilistic matrix factorization collaborative filter (PMF), the Latent Dirichlet Allocation topic model (LDA), and the Gaussian mixture model (GMM).
\end{abstract}

\section{Introduction}
Posterior inference is one of the core problems of Bayesian modeling. There are two fundamental approaches to posterior inference: One uses Markov chain Monte Carlo (MCMC) sampling techniques, which are asymptotically correct, but tend to be slow compared to point-estimate techniques and not scalable to large datasets \cite{Hastings70}. Variational inference is a second approach that approximates the posterior distribution with a simpler distribution family and then learns its parameters by minimizing their Kullback–Leibler (KL) divergence \cite{Jordan-VI-99}. This optimization imposes new types of challenges, for example finding closed form expressions or scaling to large datasets.

To deal with non-convexity in variational inference, methods such as selective marginalization \cite{fazelnia2022probabilistic} or convex relaxations have been proposed \cite{fazelnia2018crvi}. Annealing methods have also shown promise at improving local optimal solutions. Annealing has been studied for variational inference in a variety of contexts \cite{katahira2008deterministic,yoshida2010bayesian,gultekin2015stochastic,BleiVT}. These approaches often perform a deterministic inflation of the entropy term in the variational objective to allow for more exploration of the parameter space in early iterations, gradually shrinking to the true entropy over time to optimize the desired KL divergance. Stochastic variational inference is another implicit, less investigated annealing approach that we consider in this paper.

Stochastic variational inference (SVI) is a practical method that allows for efficient inference over large datasets \cite{hoffman2013stochastic}. It combines natural gradients and stochastic optimization to learns the variational parameters over an entire dataset quickly. As with other stochastic gradient descent methods, SVI's gradients are random because the subset of data they are calculated over is sampled, and the variance of this gradient is inversely proportional to the number of subsamples selected in each iteration. This introduces an annealing effect \cite{mandt2016variational}, which has benefits beyond inference for big data when the noise is Gaussian \cite{welling2011bayesian}. However, if Gaussian noise is desired in the stochastic gradient of SVI a catch-22 arises: To achieve a more Gaussian noise in the gradients a larger batch size is required, but larger batch sizes also reduce the variance of the gradient, potentially reducing the annealing effect. As a result, the algorithm may still not be able to escape bad local optimal solutions. 

The goal of our paper is to simultaneously increase the Gaussianity of the stochastic gradients by increasing the batch size, while not reducing the desired effective variance of this batch size. We do this using mathematical properties of our considered model framework itself rather than the simpler but potentially less practical trick of directly adding Gaussian noise to the gradients. 
In the context of conjugate exponential family models, the result is the introduction of a \textit{one-dimensional} noise term that can be included in the stochastic update of the natural parameter. The variance of this noise is determined by the desired effective batch size as defined below. This desired effective sample size is less than the actual batch size used in the stochastic update. We derive the general algorithm, and illustrate its ability to find better local optima on three models: the probabilistic matrix factorization collaborative filter, the Latent Dirichlet Allocation topic model, and the Gaussian mixture model. We argue that this methodology can benefit both stochastic and batch inference, bringing the advantages of SVI to smaller data sets as well.

The paper is organized as follows: Section 2 provides a review of the conjugate exponential family model structure we focus on, and the general stochastic variational inference technique for that model framework. We then derive and present our proposed method in Section 3 for increasing the annealing performance of SVI while still using larger batch sizes. The result is a very simple general purpose algorithm with no additional computational overhead. In Section 4 we illustrate the approach with empirical results on three popular models for machine learning that highlights different aspects of the methods potential usefulness, including improving batch VI by incorporating big data inference ideas in smaller problems.

\section{Background}\label{sec.model.VI}
We derive our modification to SVI, called SVI+ (``plus'' more annealing), in the context of a single variable within a probabilistic graphical model. Assume a set $X = (x_1,\dots,x_N)$, where each $x_n$ constitutes a single observation, such as a document, image, or a feature vector.
Many generative models can be written in form 
\begin{equation}
 x_n\,|\,\theta_n \stackrel{ind}{\sim} p(X|\theta_n,\beta),\quad  \theta_n \stackrel{iid}{\sim} p(\theta),\quad \beta \sim p(\beta).
\end{equation}
Here, the \textit{local} variables $\theta_n$ impact data $x_n$, while the \textit{global} variables $\beta$ are used by all of the data. The joint likelihood of this model can be written
\begin{equation}
  p(x,\theta,\beta) = p(\beta)\prod\nolimits_{n=1}^N p(x_n|\theta_n,\beta)p(\theta_n).
\end{equation}
When the full posterior $p(\theta,\beta|x)$ is intractable, we can approximate it with a factorized distribution $q(\beta,\theta) = q(\beta)\prod_n q(\theta_n)$ using variational inference. The variational objective function here takes the form
\begin{eqnarray}
 \mathcal{L} &=&   \mathbb{E}_q[\ln p(x,\theta,\beta)] - \mathbb{E}_q[\ln q(\beta,\theta)]\nonumber\\[5pt]
 &=&    \mathbb{E}_q\Big[\ln \frac{p(\beta)}{q(\beta)}\Big] + \sum\nolimits_{n=1}^N \mathbb{E}_q\Big[\frac{\ln p(x_n,\theta_n|\beta)}{q(\theta_n)}\Big].
\end{eqnarray}
We assume that the model is in the conjugate exponential family (CEF). As a result, each $q$ distribution is optimally chosen to be in the same exponential family as the prior, and has a natural parameter vector: $\lambda_n$ for $q(\theta_n)$ and $\lambda$ for $q(\beta)$.

Since optimizing $q(\theta_n)$ does not change from standard VI inference in the SVI setting, we focus on the stochastic update for $q(\beta)$, which requires updating $\lambda$. Let $\eta$ be the natural parameter for the prior distribution on $\beta$. In the standard batch variational inference approach, the natural parameter update of $q(\beta)$ for a CEF model equals
\begin{equation}
   \lambda = \eta + \sum\nolimits_{n=1}^N \mathbb{E}_q[t(\theta_n)],
\end{equation}
where the expectation uses the current value of $\lambda_n$ in $q(\theta_n)$ and $t(\theta_n)$ is the sufficient statistic function of the conditional posterior distribution of $\beta$. The variational update therefore sums over the expectation of these sufficient statistics. Next, we discuss how these updates are modified for stochastic inference, and note how there is an implied Gaussian annealing being done by SVI. That discussion will point towards a modified SVI algorithm presented later, which we call SVI+.

\subsection{Stochastic Variational Inference}
Stochastic variational inference (SVI) can be used when $N$ is too large to update each $q(\theta_n)$ in a reasonable amount of time within one iteration. In this setting, each iteration samples a subset of $x_n$, indexed by set $\mathcal{S}_t$ at iteration $t$. It then constructs the iteration-dependent objective function
\begin{equation}
   \mathcal{L}_t = \mathbb{E}_q\Big[\ln \frac{p(\beta)}{q(\beta)}\Big] + \frac{N}{|\mathcal{S}_t|}\sum\nolimits_{n\in \mathcal{S}_t} \mathbb{E}_q\Big[\frac{\ln p(x_n,\theta_n|\beta)}{q(\theta_n)}\Big].
\end{equation}
Then, the stochastic update $\lambda \leftarrow (1-\rho_t)\lambda + \rho_t\nabla_{\lambda}\mathcal{L}_t$ is made, which in the CEF framework is equal to
\begin{equation}\label{eq.stochasticupdate}
   \lambda \leftarrow (1-\rho_t)\lambda + \rho_t\Big(\eta + \frac{N}{|\mathcal{S}_t|}\sum\nolimits_{n\in \mathcal{S}_t} \mathbb{E}_q[t(\theta_n)]\Big).
\end{equation}
The step size $\rho_t > 0$ satisfies $\sum_{t=1}^{\infty} \rho_t = \infty$ and $\sum_{t=1}^{\infty}\rho_t^2 < \infty$. The expectation of this gradient over the randomness of the subset $\mathcal{S}_t$ at iteration $t$ equals the ground truth gradient over the entire dataset. Combined with the conditions on $\rho_t$, this guarantees convergence to a local optimal solution of $\mathcal{L}$.

\subsection{SVI as an Annealing Method}
We can think of the stochastic update in (\ref{eq.stochasticupdate}) as
\begin{equation}\label{eq.impliednoise}
   \lambda \leftarrow (1-\rho_t)\lambda + \rho_t\Big(\eta + \sum\nolimits_{n=1}^N \mathbb{E}_q[t(\theta_n)] + \epsilon_t\Big),
\end{equation}
where $\epsilon_t \sim p_t(\epsilon)$ is generated from an implied zero-mean distribution. Therefore, in addition to a scalable inference method, SVI can also be viewed as an annealing method. That is, by subsampling the data, SVI implicitly adds noise to the true gradients, which is known in other optimization contexts to have the potential advantage of escaping bad local optimal solutions. Incidentally, this helps explain why SVI can be observed to sometimes find better local solutions than batch VI for optimizing the same variational objective function.

Simulated annealing methods often use Gaussian noise, while theoretical analysis of annealing using Langevin dynamics also presupposes Gaussian noise \cite{welling2011bayesian}, and so it is useful to understand how Gaussian is the implied random $\epsilon_t$ introduced by SVI in (\ref{eq.impliednoise}). To this end, we observe that we can write the actual stochastic update of $\lambda$ in (\ref{eq.stochasticupdate}) as
\begin{equation}
\begin{aligned}\label{eqn.stochgrad}
   &\lambda \leftarrow (1-\rho_t)\lambda + \rho_t(\eta + N \lambda'_t),\\[5pt]
    &\lambda'_t = \frac{1}{|\mathcal{S}_t|}\sum\nolimits_{n\in \mathcal{S}_t} \mathbb{E}_q[t(\theta_n)].
\end{aligned}
\end{equation}
The vector $\lambda'_t$ is the average of the expected sufficient statistics over the random subsample $\mathcal{S}_t$ of iteration $t$. The terms in this subsample are chosen $iid$ from the empirical data distribution. Therefore, by the central limit theorem, this vector is asymptotically Gaussian
\begin{equation}\label{eq.clt}
  \lambda'_t \stackrel{d}{\longrightarrow} \N(\mu,\Sigma_t),
\end{equation}
where the mean and covariance matrix are
\begin{equation}
   \mu =  \frac{1}{N}\sum\nolimits_{n=1}^N \mathbb{E}_q[t(\theta_n)],\quad \Sigma_t = \frac{1}{|\mathcal{S}_t|} \mathrm{Cov}(\mathbb{E}_q[t(\theta)]).
\end{equation}
The expected value of $\lambda'_t$ is the true gradient because the stochastic gradients of SVI are unbiased. The covariance of $\mathbb{E}_q[t(\theta)]$ is calculated over the empirical distribution of the entire data set. Therefore, the noise of SVI is approximately Gaussian, and the larger the batch size $\mathcal{S}_t$, the more Gaussian it is by the CLT.
While Gaussian noise helps anneal the variational objective, increasing the Gaussianity by increasing $|\mathcal{S}_t|$ also provides less stochasticity in the SVI update, since $\Sigma_t$ is shrinking. 
We next propose a method for achieving both aims of, 1) having more approximate Gaussian noise, and 2) higher variance gradients, to find better local optima.

\section{SVI+ for More Annealing with SVI}\label{sec.SVI+}
We propose a modification of SVI, which we call SVI+ (``plus'' more annealing). We first discuss the general mathematical framework, which while technically able to be performed directly, is greatly simplified after with a small approximation. The resulting SVI+ algorithm requires little modification to an existing SVI algorithm and virtually no computational overhead. The final SVI+ algorithm is shown in Algorithm \ref{alg.svi+}.

\subsection{Mathematical Motivation for the Proposed Method}
We propose simply adding back the variance to the stochastic gradients lost when $|\mathcal{S}_t|$ increases as follows,
\begin{equation}\label{eq.stochmath}
   \lambda \leftarrow (1-\rho_t)\lambda + \rho_t\big(\eta + N(\mu + \epsilon_t + \xi_t)\big).
\end{equation}
$\mu$ is the true batch gradient from (\ref{eq.clt}), and $\epsilon_t$ and $\xi_t$ are two noise vectors. The distributions of these vectors are
\begin{equation}
\begin{aligned}
& \epsilon_t ~~\stackrel{d}{\longrightarrow} ~\, \N\big(0,|\mathcal{S}_t|^{-1} \mathrm{Cov}(\mathbb{E}_q[t(\theta)])\big),\\
& \xi_t ~~\,\,\sim\,\,~~ \N(0,\alpha\mathrm{Cov}(\mathbb{E}_q[t(\theta)])).
\end{aligned}
\end{equation}
The first random vector $\epsilon_t$ is the noise implied by SVI using batch size $|\mathcal{S}_t|$ as previously discussed; $\mu + \epsilon_t$ is the SVI gradient. The second noise vector $\xi_t$ is our proposed addition for SVI+ and $\alpha>0$ is a parameter discussed later. From the perspective of the two noise vectors, their addition $\epsilon_t + \xi_t$ is also asymptotically Gaussian, with 
\begin{equation}\label{eq.effectivevariance}
   \epsilon_t + \xi_t \stackrel{d}{\longrightarrow}  \N\big(0,(\alpha+|\mathcal{S}_t|^{-1})\mathrm{Cov}(\mathbb{E}_q[t(\theta)])\big).
\end{equation}
As $|\mathcal{S}_t|$ increases this approximation is more accurate, but since the \textit{effective} variance of the noise is $(\alpha+\frac{1}{|\mathcal{S}_t|})\mathrm{Cov}(\mathbb{E}_q[t(\theta)])$ because of the addition of $\xi_t$, the actual variance can be controlled by a lower bound parameter $\alpha$. 

What is the use of having a lower bound in $\alpha$ when the batch size $|\mathcal{S}_t|$ is a parameter of SVI that can be selected to achieve any value of $\alpha$ for the stochastic gradient? Here we emphasize the difference between the approximate Gaussianity of $\epsilon_t$, which improves with increasing $|\mathcal{S}_t|$ by the CLT, and the actual Gaussianity of $\xi_t$ for all values of $\alpha$ as defined above. In practice, when $|S_t|$ is small, the actual noise distribution of the stochastic gradient is likely far from Gaussian. The purpose of our proposed SVI+ method is to have larger variance (as defined by the lower bound $\alpha$) with more Gaussian noise (as achieved with increasing $|\mathcal{S}_t|$). Our experiments will provide empirical evidence that, between two unbiased stochastic gradients sharing the same covariance, the one that is Gaussian is preferred. This also happens to be the maximum entropy gradient, as discussed later.

Therefore, the parameter $\alpha$ allows us to define an \textit{effective variance} for our stochastic gradient. We define this in terms of a smaller target batch size. First, note that $|\mathcal{S}_t|$ is the \textit{actual} batch size used in SVI+. Let $M$ be the smaller \textit{effective} batch size whose larger variance we seek to match with a Gaussian noise vector. We can do this simply by tuning $\alpha$:
\begin{equation}
   \frac{1}{M} = \alpha + \frac{1}{|\mathcal{S}_t|}\quad \Rightarrow \quad \alpha = \frac{|\mathcal{S}_t|-M}{M|\mathcal{S}_t|}.
\end{equation}
The LHS of the first equation is the desired variance for (\ref{eq.effectivevariance}) that approximates a batch of size $M$. However, for small $M$ the \textit{actual} SVI update is far from Gaussian, while for $|\mathcal{S}_t| > M$ and $\alpha$ defined as the above, the covariance is identical to a batch of size $M$, but closer to Gaussian distributed. We therefore have an \textit{effective} stochastic batch size of $M$ while using an actual batch size of $|\mathcal{S}_t|$.

Setting $\alpha$ to mimic a batch of size $M$, and returning to (\ref{eq.stochmath}) and writing out the stochastic gradient $\mu+\epsilon_t$ used in optimization, we can rewrite the update as follows,
$$  \lambda \leftarrow (1-\rho_t)\lambda + \rho_t\Big(\eta + \frac{N}{|\mathcal{S}_t|}\sum\nolimits_{n\in \mathcal{S}_t} \mathbb{E}_q[t(\theta_n)] + N\xi_t\Big),$$
\begin{equation}\label{eqn.up1}
\xi_t \sim \N\Big(0,\frac{|\mathcal{S}_t|-M}{M|\mathcal{S}_t|}\mathrm{Cov}(\mathbb{E}_q[t(\theta)])\Big).
\end{equation}
In Section \ref{sec.analysis} we discuss the SVI+ gradient of (\ref{eqn.up1}) in terms of maximum entropy and analyze its convergence to a Gaussian as a function of $|\mathcal{S}_t|$ and $M$. First, we modify (\ref{eqn.up1}) for implementation; since $\mathrm{Cov}(\mathbb{E}_q[t(\theta)])$ is difficult to obtain in practice and may be too high dimensional to directly work with, we discuss a small approximation that simplifies the implementation of (\ref{eqn.up1}) by not requiring this matrix.

\begin{algorithm}[t]
\caption{SVI+ for Tuneable Annealing of VI}\label{alg.svi+}
\begin{algorithmic}[1]
\Require Batch size $|\mathcal{S}_t|$ and effective batch size $M < |\mathcal{S}_t|$
\Function{At iteration $t$}{}\vspace{2pt}
\State \textbf{Sample} data indices $\mathcal{S}_t \subset \{1,\dots,N\}$\vspace{2pt}
\State \textbf{Sample} $\varepsilon_n \sim_{iid} \N(0,|\mathcal{S}_t|/M - 1)$ for $n \in \mathcal{S}_t$\vspace{2pt}
\State \textbf{Calculate} sample average $\overline{\varepsilon}$ over all $\varepsilon_n$ ($\approx 0$)\vspace{2pt}
\State \textbf{Construct} noise-added statistic vector $$\textstyle\lambda'_t = \frac{1}{|\mathcal{S}_t|}\textstyle\sum_{n\in \mathcal{S}_t} (1+\varepsilon_n-\overline{\varepsilon})\mathbb{E}_q[t(\theta_n)]$$
\State \textbf{Calculate} annealed stochastic update
$$\textstyle\lambda \leftarrow (1-\rho_t)\lambda + \rho_t(\eta + N\lambda'_t)$$
 \textbf{return} $\lambda$ --- Note: SVI special case when $M = |\mathcal{S}_t|$\EndFunction
\end{algorithmic}
\end{algorithm}

\subsection{Approximation and Simplification}
The value of $\mathrm{Cov}(\mathbb{E}_q[t(\theta)])$ in (\ref{eqn.up1}) is unknown in practice because calculating it would defeat the purpose of scalable inference. We propose an approximation that has the advantage of greatly simplifying the final algorithm, making the proposed method very easy to implement and requiring effectively no additional computational resources compared with SVI. Specifically, we suggest using the data in batch $\mathcal{S}_t$ to approximate the true $\mathrm{Cov}(\mathbb{E}_q[t(\theta)])$ calculated over the entire dataset:
\begin{equation}
  \mathrm{Cov}(\mathbb{E}_q[t(\theta)]) ~\approx~ \Sigma_t
\end{equation}
where we define
\begin{equation}\label{eqn.musigparams}
\begin{aligned}
   \Sigma_t & = \frac{1}{|\mathcal{S}_t|} \sum\nolimits_{n\in \mathcal{S}_t} (\mathbb{E}_q[t(\theta_n)] - \mu_t)(\mathbb{E}_q[t(\theta_n)] - \mu_t)^\top,\\
   \mu_t & = \frac{1}{|\mathcal{S}_t|} \sum\nolimits_{n\in \mathcal{S}_t} \mathbb{E}_q[t(\theta_n)] .
\end{aligned}
\end{equation}
A simplification results from this approximation by observing that we can generate the random target vector $\xi_t$ having the desired approximate covariance $\Sigma_t$, 
\begin{equation}\label{eqn.target}
     \xi_t \sim \N\Big(0,\frac{|\mathcal{S}_t|-M}{M|\mathcal{S}_t|}\Sigma_t\Big),
\end{equation}
as follows, 
\begin{equation}\label{eqn.gen1}
\begin{aligned}
   \xi_t &= \bigg(\frac{|\mathcal{S}_t|}{M}-1\bigg)^{\frac{1}{2}}\frac{1}{|\mathcal{S}_t|}\sum\nolimits_{n\in \mathcal{S}_t} (\mathbb{E}_q[t(\theta_n)] - \mu_t) \varepsilon_n,\\
   \varepsilon_n &\stackrel{iid}{\sim} \N(0,1).
\end{aligned}
\end{equation}
To verify that (\ref{eqn.target}) and (\ref{eqn.gen1}) are equivalent: \textit{i}\,) $\xi_t$ in (\ref{eqn.gen1}) must be Gaussian by construction, \textit{ii}\,) $\mathbb{E}[\xi_t] = 0$ because $\mathbb{E}[\varepsilon_n]=0$, and \textit{iii}\,) $\mathrm{Cov}[\xi_t]=\mathbb{E}[\xi_t\xi_t^\top] = \frac{|\mathcal{S}_t|-M}{M|\mathcal{S}_t|} \Sigma_t$ because $\mathbb{E}[\varepsilon_n^2] = 1$.
Inputting $\mu_t$ from (\ref{eqn.musigparams}) into (\ref{eqn.gen1}), we observe  that
\begin{eqnarray}
   \sum\nolimits_{n\in \mathcal{S}_t}\mu_t\varepsilon_n  &=&  \sum\nolimits_{n\in \mathcal{S}_t}\mathbb{E}_q[t(\theta_n)]\overline{\varepsilon},\nonumber\\
   \overline{\varepsilon} &=& \frac{1}{|\mathcal{S}_t|}\sum\nolimits_{n\in \mathcal{S}_t} \varepsilon_n.
\end{eqnarray}
We can therefore write the vector in (\ref{eqn.gen1}) in the following equivalent way that is useful for practical implementation:
\begin{equation}\label{eqn.gen2}
   \xi_t = \bigg(\frac{|\mathcal{S}_t|}{M}-1\bigg)^{\frac{1}{2}}\frac{1}{|\mathcal{S}_t|}\sum\nolimits_{n\in \mathcal{S}_t} \mathbb{E}_q[t(\theta_n)] (\varepsilon_n-\overline{\varepsilon}),
\end{equation}
$$   \varepsilon_n \stackrel{iid}{\sim} \N(0,1), \quad
   \overline{\varepsilon} = |\mathcal{S}_t|^{-1}\sum\nolimits_{n\in \mathcal{S}_t} \varepsilon_n.
$$
The original SVI+ update was given in (\ref{eqn.up1}). After replacing $\xi_t$ in that equation with the approximation in (\ref{eqn.gen2}), then absorbing $\sqrt{|\mathcal{S}_t|/M - 1}$ in the variance term of $\varepsilon_n$ and simplifying, we obtain the stochastic update defined below.

\begin{definition}[SVI+] Let $\mathcal{S}_t \subset \{1,\dots,N\}$ be the index set of the batch used at iteration $t$ for SVI. Let $M < |\mathcal{S}_t|$ be the desired effective batch size whose stochastic gradient is to be approximately noise-matched with a Gaussian distribution. For an exponential family $q$ distribution in a conjugate exponential family model, the SVI+ update for the natural parameters in iteration $t$ is $\lambda \leftarrow (1-\rho_t)\lambda + \rho_t\big(\eta + \lambda'_t\big)$, where the stochastic gradient $\lambda'_t$ is calculated as follows:
\begin{eqnarray}
\lambda'_t &=& \frac{N}{|\mathcal{S}_t|}\sum\nolimits_{n\in \mathcal{S}_t} (1+\varepsilon_n-\overline{\varepsilon})\mathbb{E}_q[t(\theta_n)],\nonumber\\
\varepsilon_n &\stackrel{iid}{\sim}& \N\big(0,|\mathcal{S}_t|/M - 1\big),\quad n \in \mathcal{S}_t\nonumber\\
\overline{\varepsilon} &=& \frac{1}{|\mathcal{S}_t|}\sum\nolimits_{n\in \mathcal{S}_t} \varepsilon_n.\nonumber
\end{eqnarray}
\end{definition}

The difference between this algorithm and the original SVI update in (\ref{eq.stochasticupdate}) is that random variables $\varepsilon_n - \overline{\varepsilon}$ are included as multipliers of the expected sufficient statistics of the observations. We notice that, when $M = |\mathcal{S}_t|$, $\varepsilon_n = 0$ with probability one and so we recover SVI. We also observe that SVI+ may be useful for small-scale problems as well. In that case, we let $\mathcal{S}_t = \{1,\dots,N\}$ for every iteration. We then form the batch update, but add randomness according to the effective batch size $M$ for annealing the variational objective and hopefully converging to a better local optimal solution.

\subsection{Discussion: Max Entropy, CLT Convergence \& an Extension}\label{sec.analysis}
Let $X\sim\mathcal{P}$ be an arbitrary continuous random variable with $\mathbb{E}[X] = \mu$ and $\mathrm{Cov}(X) = \Sigma$. Then it is well-known that the entropy of $\mathcal{P}$ is upper-bounded by a Gaussian distribution $\mathcal{N}$ having the same mean and covariance: Since the Kullback-Leibler divergence KL$(\mathcal{P}\|\mathcal{N}) \geq 0$, it follows that
\begin{equation}
    \mathcal{H}(\mathcal{P}) \,\leq  -\int \mathcal{P}(x)\ln \mathcal{N}(x) \, dx  \,=\,  \mathcal{H}(\mathcal{N}),
\end{equation}
where $\mathcal{H}$ is entropy. If $\lambda'_{t}$ is a stochastic gradient of batch size $|\mathcal{S}|$ at iteration $t$, then the distribution $\lambda'_{t} \sim \mathcal{P}$ is likely far from Gaussian when $|\mathcal{S}|$ is small --- recall that $\mathcal{P}$ approaches the empirical distribution of the gradients as $|\mathcal{S}|\rightarrow 1$ and converges to a Gaussian as $|\mathcal{S}|\rightarrow\infty$ due to the CLT. 
For an \textit{effective} batch size $M<|\mathcal{S}|$, SVI+ replaces the true distribution on stochastic gradient $\mathcal{P}$ for batch size $|\mathcal{S}|$ with a distribution $\mathcal{Q}$ that is more approximately Gaussian and has the same mean and covariance as $\mathcal{P}$ when $|\mathcal{S}|=M$. Therefore, SVI+ takes stochastic gradient steps approximating the maximum entropy distribution over batches of size $M$ by using $|\mathcal{S}|>M$ samples. The following theorem gives the rate of convergence of the SVI+ gradient distribution to maximum entropy.

\begin{theorem}[Max Entropy Gradients]
Let $M$ be the effective batch size of SVI+ and write the actual batch size as $|\mathcal{S}| = \tau M$, $\tau \geq 1$, where $\mathcal{S}$ is an index set selected uniformly iid from $\{1,\dots,N\}$. Let the SVI+ gradient $Y = \xi + |\mathcal{S}|^{-1}\sum_{n\in\mathcal{S}}\lambda_n$ where $\xi \sim \mathcal{N}\big(0,\frac{|\mathcal{S}|-M}{M|\mathcal{S}|}\Sigma_{\lambda}\big)$, and let $Z\sim \mathcal{N}(\mu_{\lambda},\frac{1}{M}\Sigma_{\lambda})$ where $\mu_{\lambda}$ and $\Sigma_{\lambda}$ are the sample mean and covariance over all $\{\lambda_n\}_{n=1}^N$. Then the total variation distance $\delta$ between the distributions on $Y$ and $Z$ is bounded by
$$\delta(\mathcal{Q}(Y),\mathcal{N}(Z)) \leq \frac{C d^{\frac{1}{4}}}{\tau^2\sqrt{M}} = \frac{C d^{\frac{1}{4}}}{\tau^{3/2}\sqrt{|\mathcal{S}|}},$$
where $d$ is the dimensionality of $Y$ and $Z$, and $C$ is a constant.
\end{theorem}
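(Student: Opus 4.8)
The plan is to rewrite $Y$ and $Z$ so that the claim becomes a \emph{smoothed} multivariate central limit theorem for a sum of $|\mathcal S|$ i.i.d.\ terms, and then to show that the Gaussian smoothing present in $Y$ --- whose scale exceeds that of the sum itself by a factor $\asymp\sqrt\tau$ --- upgrades the usual $|\mathcal S|^{-1/2}$ Berry--Esseen rate by an extra factor $\tau^{-3/2}$.

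First, the reduction. Since $\mathcal S$ is sampled i.i.d.\ uniformly, $\bar\lambda_{\mathcal S}:=|\mathcal S|^{-1}\sum_{n\in\mathcal S}\lambda_n$ is the average of $|\mathcal S|$ i.i.d.\ draws from the empirical law of $\{\lambda_n\}$, so $\mathbb E\,\bar\lambda_{\mathcal S}=\mu_\lambda$ and $\mathrm{Cov}(\bar\lambda_{\mathcal S})=\tfrac1{|\mathcal S|}\Sigma_\lambda$. Because $\tfrac1{|\mathcal S|}\Sigma_\lambda+\tfrac{|\mathcal S|-M}{M|\mathcal S|}\Sigma_\lambda=\tfrac1M\Sigma_\lambda$, the law of $Z$ is that of the moment-matched Gaussian $G\sim\mathcal N(\mu_\lambda,\tfrac1{|\mathcal S|}\Sigma_\lambda)$ convolved with the same noise $\gamma_\xi:=\mathcal N(0,\tfrac{|\mathcal S|-M}{M|\mathcal S|}\Sigma_\lambda)$ that appears in $Y$; since total variation depends only on marginal laws, $\delta(\mathcal Q(Y),\mathcal N(Z))=\delta(\bar\lambda_{\mathcal S}\ast\gamma_\xi,\,G\ast\gamma_\xi)$. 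The essential point is that the smoothing $\gamma_\xi$ must be kept: without it $\bar\lambda_{\mathcal S}$ is discrete and the distance equals $1$, while the convolution makes both laws absolutely continuous; moreover $\mathrm{Cov}(\xi)=\tfrac{\tau-1}{\tau M}\Sigma_\lambda$ exceeds $\mathrm{Cov}(\bar\lambda_{\mathcal S})=\tfrac1{\tau M}\Sigma_\lambda$ by the factor $\tau-1$, so $\xi$ smooths at a scale $\sqrt{\tau-1}$ times that of the sum, which is what improves the rate.

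Second, the smoothed CLT estimate, via Fourier inversion. With $\varphi$ the characteristic function of a single empirical draw (finitely supported, mean $\mu_\lambda$, covariance $\Sigma_\lambda$), the density transforms differ by $\widehat{f_Y}(t)-\widehat{f_Z}(t)=\big(\varphi(t/|\mathcal S|)^{|\mathcal S|}-\varphi_G(t)\big)e^{-\frac12 t^\top\Sigma_\xi t}$ with $\Sigma_\xi=\tfrac{|\mathcal S|-M}{M|\mathcal S|}\Sigma_\lambda$, and the factor $\varphi_G(t)e^{-\frac12 t^\top\Sigma_\xi t}=e^{i\langle t,\mu_\lambda\rangle-\frac1{2M}t^\top\Sigma_\lambda t}$ confines the relevant frequencies to $\|\Sigma_\lambda^{1/2}t\|\lesssim\sqrt M$, where $\|t/|\mathcal S|\|\lesssim1/(\tau\sqrt M)$ is small. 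A third-order expansion of $\log\varphi$ there gives $\varphi(t/|\mathcal S|)^{|\mathcal S|}-\varphi_G(t)\approx|\mathcal S|^{-2}\kappa_3(t)\,\varphi_G(t)$ with $\kappa_3$ the empirical third-cumulant cubic form, so the density difference is $\approx|\mathcal S|^{-2}$ times a third-order Hermite polynomial against the $\mathcal N(\mu_\lambda,\tfrac1M\Sigma_\lambda)$ density, whose $L^1$-mass is of order $M^{3/2}$ (each of the three frequency factors in $\kappa_3$ contributes a whitening of order $\sqrt M$). Hence $\delta(\mathcal Q(Y),\mathcal N(Z))\lesssim|\mathcal S|^{-2}M^{3/2}\bar\beta=\tau^{-2}M^{-1/2}\bar\beta=\tau^{-3/2}|\mathcal S|^{-1/2}\bar\beta$, where $\bar\beta$ absorbs the standardized third-cumulant magnitude of the empirical law and a dimensional constant; propagating the latter through the $L^1$-to-Fourier (Plancherel/Cauchy--Schwarz, or $\chi^2$/Pinsker) step and the cumulant--Hermite integral produces the stated $d^{1/4}$, the dimension dependence typical of quantitative multivariate CLTs, and the residual $O(|\mathcal S|^{-3}\|t\|^4)$ Edgeworth terms integrate to strictly smaller powers of $1/\tau$ and $1/M$.

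The main obstacle is making this estimate rigorous with the correct $\tau$-exponent: one must bound the Edgeworth remainder and the contribution of the frequencies $\|\Sigma_\lambda^{1/2}t\|\gtrsim\sqrt M$ \emph{uniformly}, and here the smoothing factor $e^{-\frac12 t^\top\Sigma_\xi t}$ is indispensable --- it damps the large-frequency tail (including any lattice resonances of the discrete empirical law, where $|\varphi(t/|\mathcal S|)|^{|\mathcal S|}$ can return to $1$) by a Gaussian in $\|\Sigma_\lambda^{1/2}t\|/\sqrt M$, rendering those contributions $O(e^{-cM})$. Establishing the local estimate $|\varphi(s)|\le e^{-c\|s\|^2}$ near the origin is routine for finitely supported laws, but the constants, and an honest $d^{1/4}$ rather than a larger power, hinge on the conditioning of $\Sigma_\lambda$ and on Wick-type cancellation in the cumulant--Hermite integral, so I would carry a mild normalization of $\Sigma_\lambda$ and of the third cumulant through the argument. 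Finally, the bound is informative only when $\tau$ is bounded away from $1$ and $\tau^2\sqrt M\gtrsim d^{1/4}$; in the complementary regime the smoothing is too weak, the right-hand side exceeds $1$, and the estimate holds vacuously, so I would state this explicitly rather than track the blow-up of the expansion constants as $\tau\downarrow1$.
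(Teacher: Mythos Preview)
Your proposal is correct at the level of a sketch and in fact carries more of the burden than the paper does. The paper's own proof is a two-line sketch: absorb the Gaussian $\xi$ into the i.i.d.\ summands (so each term is $\lambda_n+\xi_n$), invoke the multivariate Berry--Esseen theorem, and claim the resulting standardized third moment $\tfrac{1}{N}\sum_n\mathbb{E}_\xi\|\Sigma_\lambda^{-1/2}(\lambda_n+\xi)\|^3$ is bounded and can be pushed into $C$. You instead keep $\xi$ as a separate Gaussian convolution and run a Fourier/Edgeworth argument, isolating the third-cumulant term $|\mathcal S|^{-2}\kappa_3(t)$ against the $\mathcal N(\mu_\lambda,\tfrac1M\Sigma_\lambda)$ envelope and integrating to obtain $M^{3/2}/|\mathcal S|^2=\tau^{-2}M^{-1/2}$.

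The difference matters. A black-box Berry--Esseen with the noise absorbed per summand yields a per-term covariance $\tau\Sigma_\lambda$; whitening pulls out $\tau^{-3/2}$, but the Gaussian summand $\xi_n\sim\mathcal N(0,(\tau-1)\Sigma_\lambda)$ contributes a third absolute moment of order $\tau^{3/2}$, so the standardized third moment is $O(1)$ in $\tau$ and the bound one actually reads off is only $Cd^{1/4}/\sqrt{|\mathcal S|}=Cd^{1/4}/(\tau^{1/2}\sqrt M)$, not the claimed $\tau^{-2}M^{-1/2}$. Your smoothed-CLT route is precisely what recovers the extra $\tau^{-3/2}$: the Gaussian $\xi$ does not change the third cumulant of $Y$, so the leading Edgeworth correction is still $|\mathcal S|^{-2}\kappa_3$, while the reference density now has the larger scale $M^{-1}\Sigma_\lambda$, which is where the $M^{3/2}$ comes from. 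In short, the paper and you share the same target (a quantitative CLT with the $d^{1/4}$ Bentkus-type dimensional factor), but your decomposition---treating $\xi$ as smoothing rather than as part of the summands---is what actually delivers the stated $\tau$-exponent. Your caveats about the regime $\tau\downarrow 1$ and about conditioning of $\Sigma_\lambda$ are appropriate and are not addressed in the paper's sketch.
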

\begin{proof}[Proof (sketch)] Apply the Berry-Esseen theorem, simplify, and note the resulting expectation $\frac{1}{N}\sum_{n=1}^N \mathbb{E}_{p(\xi)}[\|\Sigma_{\lambda}^{-\frac{1}{2}}(\lambda_n + \xi)\|_2^3]$, $\xi \sim \mathcal{N}(0,\frac{1}{M}\frac{\tau-1}{\tau}\Sigma_{\lambda})$, can be bounded and absorbed in $C$.
\end{proof}

Expressing this bound in both $|\mathcal{S}|$ and $M$ via the relationship $|\mathcal{S}| = \tau M$ illustrates the two aspects of convergence in terms of these SVI+ parameters. First, when $\tau=1$, the SVI special case results and the convergence rate is $\mathcal{O}(1/\sqrt{|\mathcal{S}|})$, as expected by the CLT. For a fixed $M$, the rate is $\mathcal{O}(1/\tau^2)$ in $\tau$. The convergence is so rapid in $\tau$ because the Gaussian  $\xi$ increasingly accounts for the noise, while the noisy gradient $|\mathcal{S}|^{-1}\sum_{n\in\mathcal{S}}\lambda_n$ is itself increasingly Gaussian by the CLT.

Finally, we observe that our analysis and algorithm can be applied to any problem in which stochastic gradient descent is used. While our focus is on probabilistic models, the generic form of our objective is $\textstyle\mathcal{L} = \sum_{n=1}^N f_{\varphi}(x_n),$ for which SGD follows for learning $\varphi$. In these more general learning contexts ``SVI+'' may be usefully incorporated, e.g., in the backpropagation algorithm for learning neural networks.

\begin{figure*}[th!]
	\includegraphics[trim={9mm 2mm 9mm 2mm},clip,width=0.33\textwidth]{./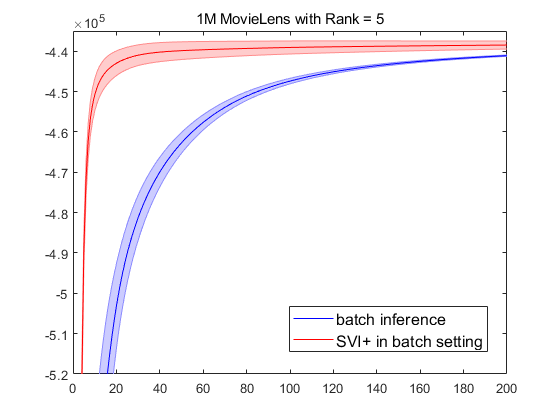}
	\includegraphics[trim={9mm 2mm 9mm 2mm},clip,width=0.33\textwidth]{./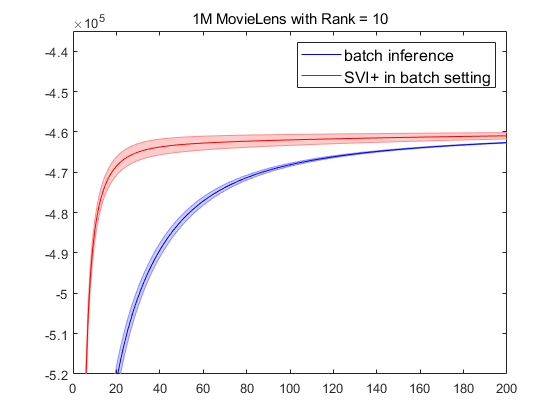}
	\includegraphics[trim={9mm 2mm 9mm 2mm},clip,width=0.33\textwidth]{./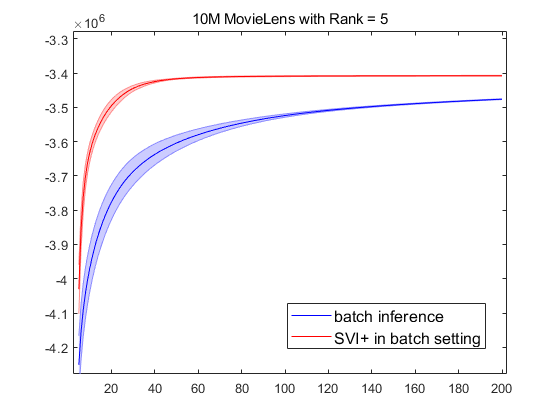}
	\caption{Variational objective as a function of iteration for the probabilistic matrix factorization collaborative filter model. We use both the 1M and 10M MovieLens data sets and average over 20 random initializations. In all experiments, the batch size $|\mathcal{S}_t|$ equals the data size, with the difference of SVI+ being the effective batch size used for stochastically annealing the gradients. SVI+ finds a better local optimal solution and converges significantly faster.}\label{fig.pmf}
\end{figure*}

\section{Empirical Results}\label{sec.experiments}
We provide some empirical results to demonstrate the ability of SVI+ to outperform both batch and stochastic variational inference in terms of finding better local optimal solutions. We demonstrate this on three models, each focusing on a different aspect of the annealing performance of SVI+.

\subsection{Example 1: Probabilistic Matrix Factorization}\label{sec.pmf}
We first consider variational inference for the probabilistic matrix factorization (PMF) model used for collaborative filtering \cite{mnih2007probabilistic}. PMF models the rating $y$ given by user $i$ to item $j$ as $y_{ij} \sim \N(u_i^\top v_j,\sigma^2)$,
for $(i,j)$ in a measured index set $\Omega$. The vectors $u,v\in\mathbb{R}^d$ parameterize the user and item locations, respectively, and have Gaussian priors. We define $q(u,v) = \prod_i q(u_i) \prod_j q(v_j)$ where each $q$ is a multivariate Gaussian distributions whose parameters are to be learned using variational inference. For the Gaussian $q(u_i)$, the sufficient statistics over the entire data set restricted to user $i$ equals
\begin{equation}
    \lambda'_{u_i} =  \sigma^{-2}\sum\nolimits_{j : (i,j)\in\Omega} \big[\mathbb{E}_q[v_j v_j^\top]\, , \,  y_{ij}\mathbb{E}_q[v_j]\big],
\end{equation}
and a similar pattern holds for each $v_j$.

\begin{figure}[bhp!]\vspace{-5pt}
 \includegraphics[trim={0mm 0mm 0mm 4.2mm},clip,width=1\columnwidth]{./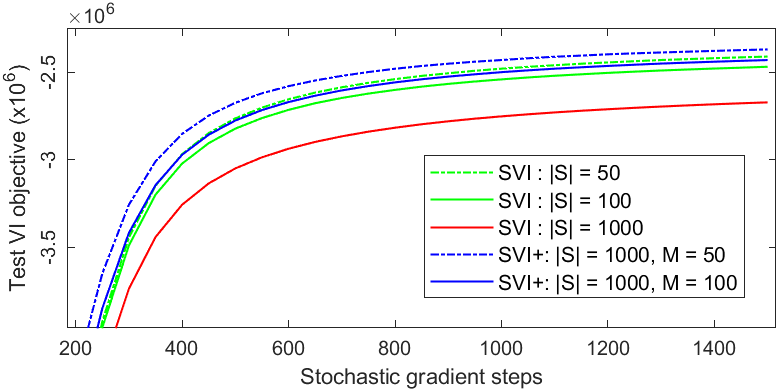}
 \caption{SVI+ compared with SVI for the LDA model averaged over 10 runs as a function of 1500 stochastic gradient steps. SVI+ with $|\mathcal{S}|=1000$ and $M=50,100$ finds better local optimal solutions than SVI using any of these batch sizes. This indicates that the annealing done by SVI+ at effective level of $M=50,100$ performs better than using SVI with an actual batch size of 50 or 100. Also observed is that increasing the batch size of SVI degrades performance, possibly due to reduced annealing from less stochasticity in the gradients.}\label{fig.LDA}
\end{figure}

We evaluate the annealing performance of SVI+ when a batch VI algorithm is modified. That is, in Algorithm \ref{alg.svi+} we take the index set $\mathcal{S}_t = \Omega$ for each iteration, but either use or don't use annealing by tuning $M$. We evaluate results on the benchmark 1 million and 10 million MovieLens data sets. We set parameters $c = 1$, $\sigma^2 = 1/2$ and note that $y$ is a star ranking between 1 and 5. We consider both $d = 5, 10$.

Our results are shown in Figure \ref{fig.pmf}, where we plot the variational objective functions from 20 randomly initialized runs. For batch inference, we use the standard VI algorithm to learn $q$. For SVI+, we use Algorithm \ref{alg.svi+} with $\mathcal{S}_t = \Omega$, $\rho = 0.85$ and an increasing $M_t = 50t$, which transitions from SVI+ to batch over the iterations. As is evident, SVI+ consistently converges to a better local optimal solution for both data sets and different factorization rank settings, which can be entirely attributed to the annealing performed by SVI that allows for better distributions to be learned on the embeddings $u$ and $v$. Batch inference is the preferred choice for this data, since each iteration of the larger 10 million set required about 9 seconds on a desktop computer. However, as Figure \ref{fig.pmf} shows, incorporating the implied annealing from SVI+ can produce a significantly better model fit with virtually no added computation time.

\begin{figure}[th!]
\centering
	\includegraphics[trim={7mm 2mm 13mm 1mm},clip,width=.49\columnwidth]{./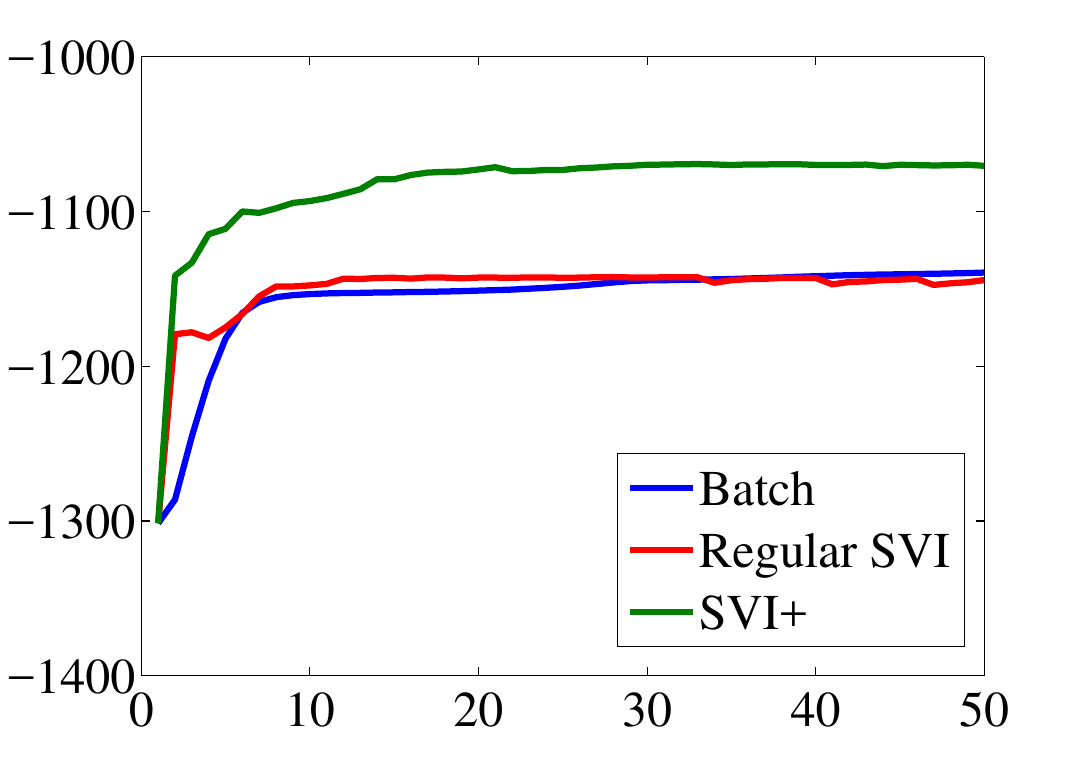}
	\includegraphics[trim={9mm 2mm 13mm 1mm},clip,width=.49\columnwidth]{./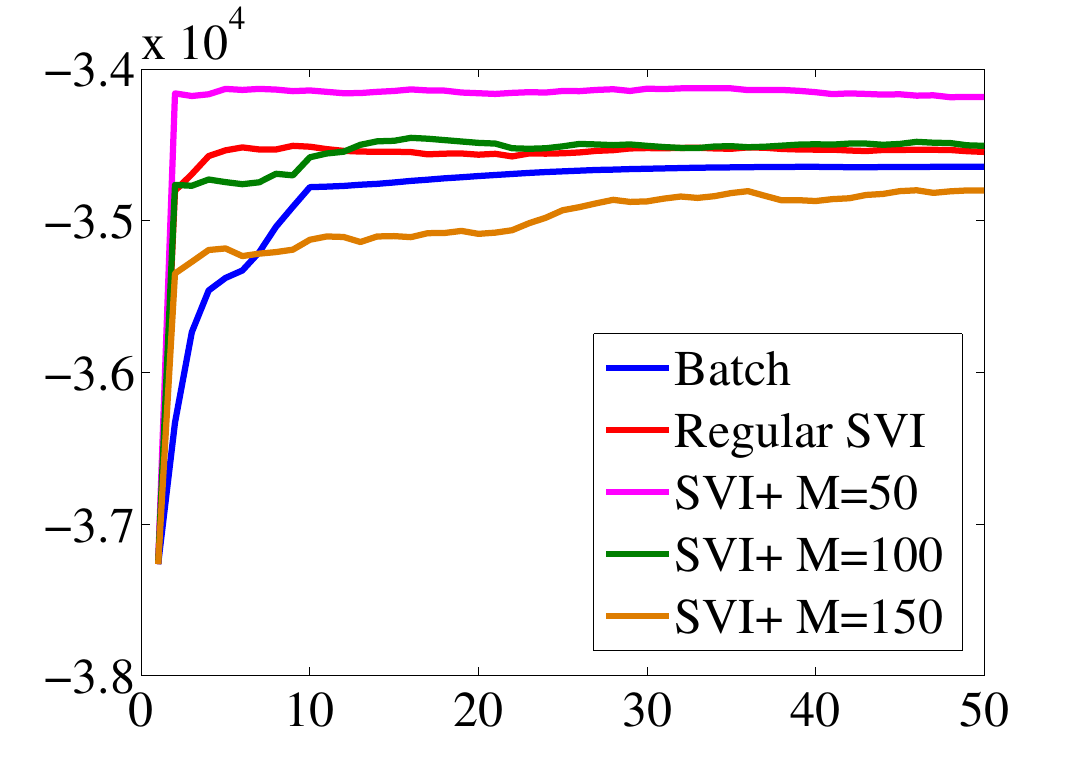}
	\caption{VI objective as a function of iteration for a typical run. Left: Synthetic data. Right: Pima dataset with four different effective batch sizes $M$.}\label{fig.obj.comp}
\end{figure}
\subsection{Example 2: Latent Dirichlet Allocation}
We also consider the topic modeling problem using Latent Dirichlet Allocation (LDA) \cite{blei2003latent}. Where the previous example demonstrated SVI+ in batch settings, here we compare how SVI+ performs against SVI when the batch size $|\mathcal{S}_t|$ of SVI is equal to either the batch size of SVI+ or the smaller $M$ setting of SVI+. We ran SVI with batch sizes $|\mathcal{S}_t| = 50, 100, 1000$ and SVI+ with $|\mathcal{S}_t| = 1000$ and $M = 50, 100$. We learn 150 topics on 286,753 articles from \textit{The New York Times} and test the VI objective on 1000 articles.

In Figure \ref{fig.LDA}, we show the variational objective as a function of batches seen averaged over 10 runs with shared randomization. As is clear, SVI+ outperforms SVI with batch size $|\mathcal{S}_t|=1000$ as a result of the increased annealing afforded by SVI+ when $M=50,100$. We also observe that SVI+ with these settings outperforms SVI when $|\mathcal{S}_t| = 50,100$. This indicates that the annealing done by SVI+ at the equivalent effective batch size $M=50,100$ when $|\mathcal{S}_t| = 1000$ is slightly better than the annealing done by SVI when $|\mathcal{S}_t| = 50,100$. Note that the improvement of SVI-50 and SVI-100 over SVI-1000 highlights the annealing done by SVI itself.

\subsection{Example 3: Gaussian Mixture Model}\label{sec.GMM}
For data $x \in \mathbb{R}^d$, the GMM generates $x_i|c_i \sim \N(\mu_{c_i},\Lambda_{c_i})$, $c_i \sim \text{Discrete}(\pi)$. Independent Dirichlet, Gaussian, and Wishart conjugate priors are used for $\pi$, $\mu_j$ and $\Lambda_j$, respectively, and posterior $q$'s are in the same family and factorization as the prior. We use a synthetic 2D dataset with 250 samples and Pima from the UCI repository, which is an 8D dataset of 768 samples. We note that these data sizes are small relative to the previous examples, demonstrating SVI+ in this setting. For synthetic data, we set $|\mathcal{S}_t|=50$ and $M=10$. For Pima, we set $|\mathcal{S}_t| = 200$ and vary $M \in \{50, 100, 150\}$. Figure \ref{fig.obj.comp} shows plots of the VI objective for the two datasets. SVI+ is able to achieve better local optimal values compared to regular SVI and batch inference. Also, SVI+ with smaller $M$ results in better objective values. 

We also consider a nonparametric extension in which the underlying number of clusters is learned by an approximate Dirichlet process (DP). In this experiment, we use synthetic data containing 4 clusters and approximate the DP with 50 clusters and a sparsity-inducing prior. We learn the model for $|\mathcal{S}_t| \in \{10,20,30,40,50\}$ and $M \in\{0.2|\mathcal{S}_t|,0.4|\mathcal{S}_t|,0.6|\mathcal{S}_t|\}$ for a total of 15 combinations. We ran each 10 times with random initializations. We show the average sorted probability distribution over clusters in Figure \ref{fig.phicomp} (top). We see that batch inference consistently learned 5 clusters while SVI learned 3 clusters. SVI+ is more accurate because it converges more consistently to a better local optima than batch because of improved annealing. Figure \ref{fig.phicomp} (bottom) shows box plots of the sorted cumulative sums of 150 runs. As can be seen, most of the data are contained in the first two clusters in SVI (blue). SVI+ (red) learns a more accurate distribution of the data.

\begin{figure}[t]
\centering
	\includegraphics[trim={22mm 5mm 13mm 5mm},clip,width=0.32\columnwidth]{./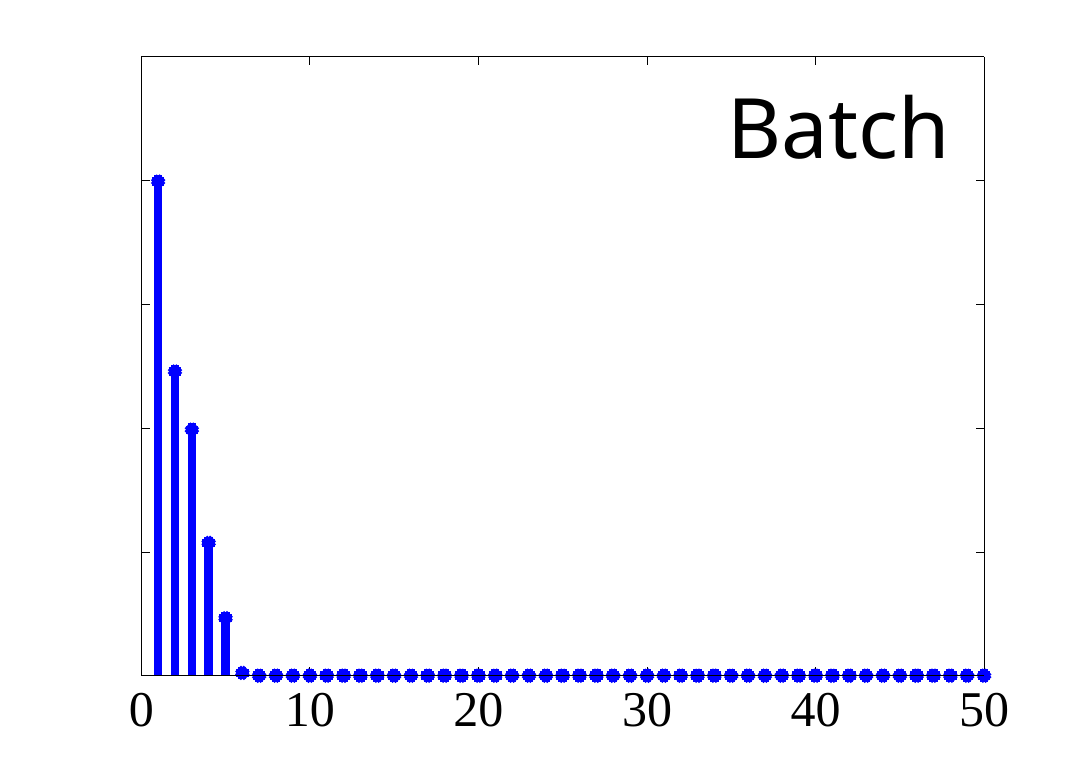}
	\includegraphics[trim={22mm 5mm 13mm 5mm},clip,width=0.32\columnwidth]{./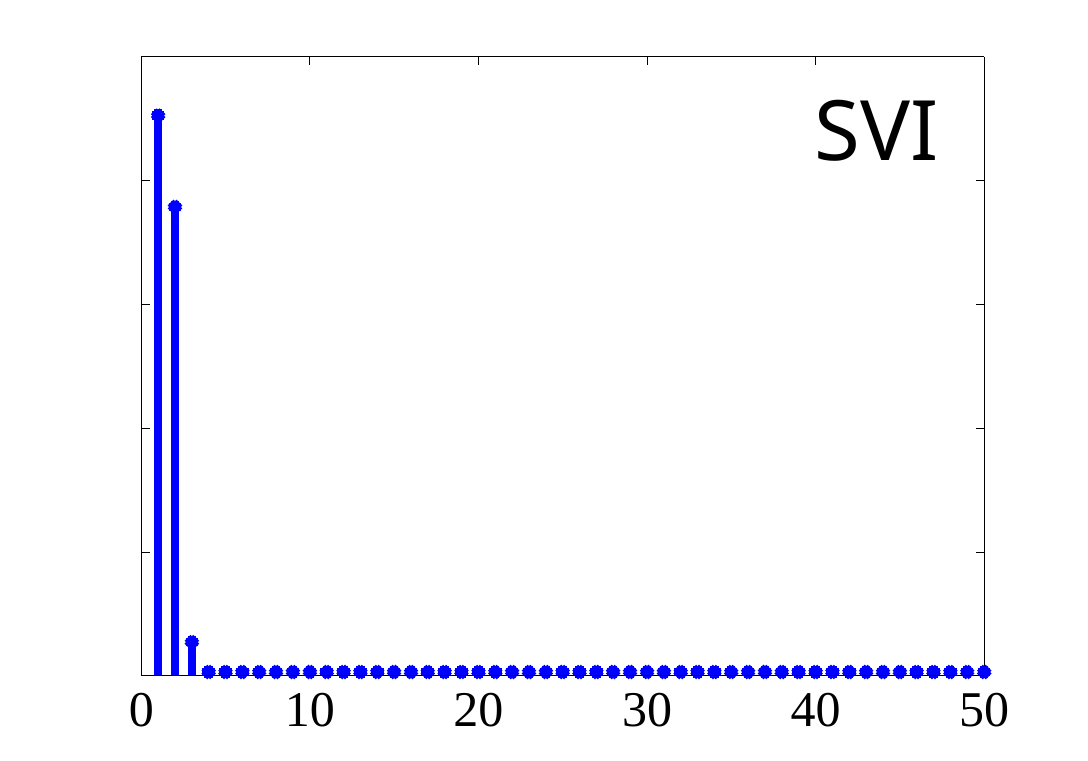}
	\includegraphics[trim={22mm 5mm 13mm 5mm},clip,width=0.32\columnwidth]{./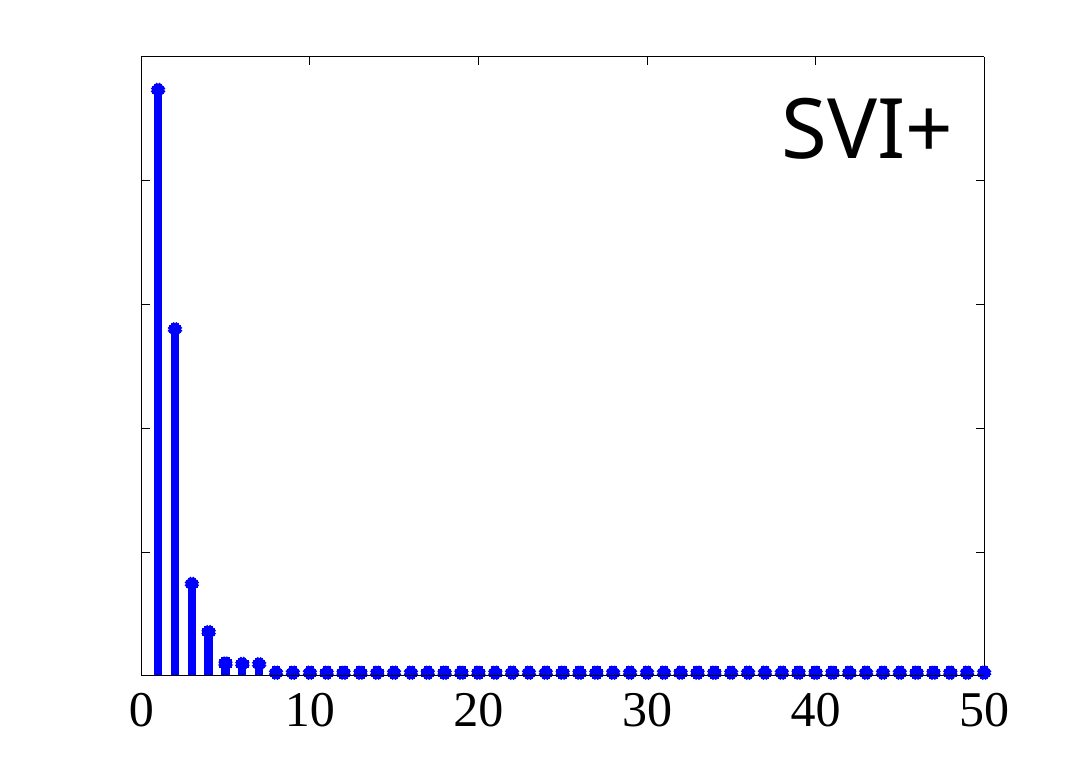}
    \includegraphics[trim={23mm 16mm 15mm 5mm},clip,width=1\columnwidth,height=1in]{./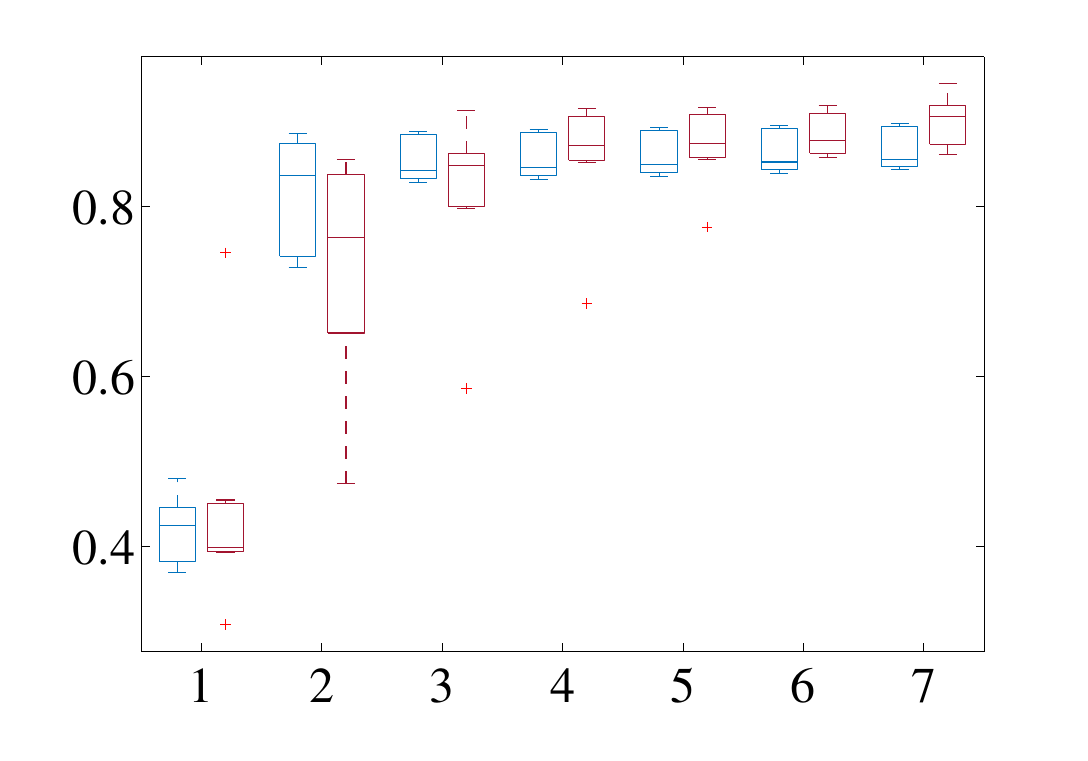}
	\caption{TOP: Average size-sorted empirical distribution of data across 50 clusters. The ground truth number of clusters was 4. Batch inference consistently over estimates this number, while SVI oversimplifies the model by underestimating it. BOTTOM: Box plot version of top (SVI vs SVI+) showing cumulative percentage of data contained up to the given cluster (150 runs). SVI (blue) puts more data in fewer clusters than SVI+ (red).}\label{fig.phicomp}
\end{figure}

\section{Conclusion}
We propose SVI+, an annealing method for VI that increases the Gaussian noise in a stochastic VI gradient by matching the variance of a larger batch to a smaller one. The large batch gives more accurate gradient information while a simple addition of 1D noise variables increases the \textit{Gaussian} noise of the gradient with no computational overhead. Experiments demonstrate the possibility for improved local optimal convergence in stochastic and batch settings when compared with standard SVI using either the corresponding larger or smaller batch size.

\bibliographystyle{IEEEtran}
\bibliography{SVIplus}

\end{document}